
\documentclass[letterpaper, 10 pt, conference]{ieeeconf}  

\IEEEoverridecommandlockouts                              
\overrideIEEEmargins

\usepackage{balance} 
\usepackage{graphics} 
\usepackage{epsfig} 
\usepackage{amsmath} 
\usepackage{amssymb}  
\usepackage{amsfonts}
\usepackage{xcolor}
\usepackage{bm}
\usepackage{algorithm}
\usepackage[noend]{algorithmic}
\usepackage{url}
\usepackage{gensymb}
\usepackage{booktabs}
\usepackage{adjustbox}


\newcommand{\BlackBox}{\rule{1.5ex}{1.5ex}}  
\ifdefined\proof
    \renewenvironment{proof}{\par\noindent{\bf Proof\ }}{\hfill\BlackBox\\[2mm]}
\else
    \newenvironment{proof}{\par\noindent{\bf Proof\ }}{\hfill\BlackBox\\[2mm]}
\fi
 
\newtheorem{theorem}{Theorem}
\newtheorem{lemma}{Lemma}

\newtheorem{definition}{Definition}

\newcommand{\T}[1]{\bar{\mathcal{T}}_{#1}^{*}}
\newcommand{\bH}[1]{\bar{H}_{#1}^{*}}
\DeclareMathOperator*{\argmax}{arg\,max}

\newtheorem{assumption}{Assumption}

\newcommand{\revision}[1]{\textcolor{black}{#1}}

\title{\LARGE \bf
Non-Stationary Policy Learning for Multi-Timescale Multi-Agent Reinforcement Learning
}

\author{Patrick Emami, Xiangyu Zhang, David Biagioni and Ahmed S. Zamzam
\thanks{This work was authored in part by the National Renewable Energy Laboratory (NREL), operated by Alliance for Sustainable Energy, LLC, for the U.S. Department of Energy (DOE) under Contract No. DE-AC36-08GO28308. This work was supported by the Laboratory Directed Research and Development (LDRD) Program at NREL. The views expressed in the article do not necessarily represent the views of the DOE or the U.S. Government. The U.S. Government retains and the publisher, by accepting the article for publication, acknowledges that the U.S. Government retains a nonexclusive, paid-up, irrevocable, worldwide license to publish or reproduce the published form of this work, or allow others to do so, for U.S. Government purposes. A portion of this research was performed using computational resources sponsored by the Department of Energy's Office of Energy Efficiency and Renewable Energy and located at the National Renewable Energy Laboratory.}
\thanks{Patrick Emami, Xiangyu Zhang, and Ahmed Zamzam are with the National
Renewable Energy Laboratory, Golden, CO 80401 USA (e-mail:
{\{{patrick.emami, xiangyu.zhang, ahmed.zamzam\}@nrel.gov}}.}%
\thanks{David Biagioni is with Maplewell Energy,
        Broomfield, CO 80021, USA (email: {{{dave@maplewelleng.com)}}}.}%
}

\begin{document}

\maketitle
\thispagestyle{empty}
\pagestyle{empty}

\begin{abstract}
In multi-timescale multi-agent reinforcement learning (MARL), agents interact across different timescales.
In general, policies for time-dependent behaviors, such as those induced by multiple timescales, are non-stationary.
Learning non-stationary policies is challenging and typically requires sophisticated or inefficient algorithms.
Motivated by the prevalence of this control problem in real-world complex systems, we introduce a simple framework for learning non-stationary policies for multi-timescale MARL. 
Our approach uses available information about agent timescales to define a periodic time encoding.
\revision{In detail, we theoretically demonstrate that the effects of non-stationarity introduced by multiple timescales can be learned by a periodic multi-agent policy. 
To learn such policies, we propose a policy gradient algorithm that parameterizes the actor and critic with phase-functioned neural networks, which provide an inductive bias for periodicity.}
The framework's ability to effectively learn multi-timescale policies is validated on a gridworld and building energy management environment.
\end{abstract}

%
\section{INTRODUCTION}
The ability to control multiple interacting components is essential to efficiently manage complex systems. For instance, in power systems applications, flexible loads and distributed generation operate within this complex system with coupling in the dynamical models, constraints, and objectives. Similar multi-component control challenges appear in transportation systems, robotics, etc.
Thus, interest in data-driven approaches that try to learn distributed control policies from experience has grown recently.
Multi-agent reinforcement learning (MARL) is a promising agent-based sequential decision-making framework for learning complex coordination strategies.
Crucially, it does not depend on having access to a model of the (often stochastic and nonlinear) environment dynamics.

However, applying MARL to real-world problems is challenging because agents typically only receive noisy, partial observations of the environment state and have limited communication with each other.
Moreover, from the perspective of each agent, the environment dynamics appear to shift over time as other agents learn to adapt their behaviors.
All of these factors contribute to the extreme \emph{non-stationarity} present in MARL, which makes learning good agent policies notoriously challenging~\cite{papoudakis2019dealing,HernandezLeal2019}. 
Despite the dedication of significant effort to discover practical MARL algorithms capable of learning policies in the face of non-stationarity~\cite{Lowe2020,Foerster2017}, there remains a gap between the synthetic environments used for algorithm development and the real world.

This work studies an under-explored MARL setting inspired by real world applications where agents need to coordinate \emph{time-dependent} actions across different timescales.
This type of time-dependent coordination, arises, for example, in:
\begin{itemize}
    \item[\textbf{Ex 1}:] Power systems applications where we wish to learn a coordination strategy between electrical devices that can be controlled at different timescales (e.g., energy storage units, solar panels, and thermostatically controlled loads) connected to the same micro-grid (Fig.~\ref{fig:demo}),
    \item[\textbf{Ex 2}:] Robotic control tasks where multiple heterogeneous robots try to collaborate to execute a task (e.g., moving an object) while their actuators are controlled at different frequencies.
\end{itemize} 
Introducing a time dependency via multiple timescales to MARL adds \emph{additional} sources of non-stationarity, which makes learning policies particularly challenging. 

\begin{figure}[t]
    \centering
    \includegraphics[width=\columnwidth]{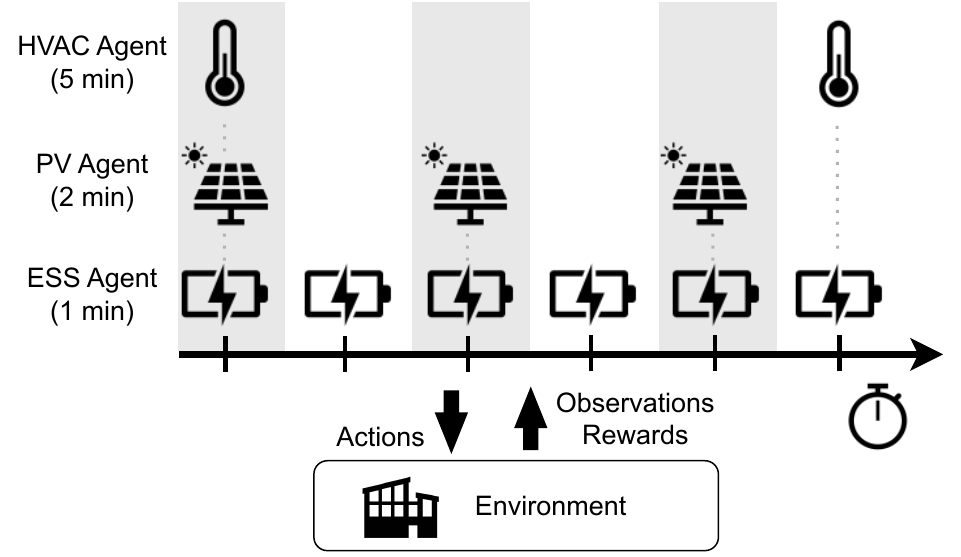}
    \caption{In \textbf{Multi-timescale MARL}, agents that act on different timescales try to learn to coordinate to achieve a goal, e.g., controlling a heterogeneous set of electronic devices for building energy optimization. Learning a \emph{non-stationary} multi-agent policy allows multi-timescale agents to perform complex time-dependent behaviors.}
    \label{fig:demo}
    \vspace{-1.5em}
\end{figure}
In this work, we \revision{formally define the multi-timescale decentralized partially observable Markov decision process (Dec-POMDP) setting} and propose a framework for learning multi-agent time-dependent (i.e., non-stationary) policies \revision{to solve such environments.
We show that multiple timescales induces the optimal multi-timescale policy to be periodic in nature.
A practical policy gradient method for learning periodic multi-agent policies based on phase-functioned neural networks~\cite{Holden2017} is provided.
Our framework's ability to learn effective multi-timescale policies with fewer environment interactions than key baselines is validated on a gridworld and a building energy management environment.}
\section{\revision{Multi-timescale Dec-POMDPs}}
\label{sec:bg}
First, we \revision{define a Dec-POMDP~\cite{bernstein2002complexity}, which} is a multi-agent stochastic game defined by a tuple: 
\begin{equation}
\label{eq:dec-pomdp}
\texttt{DEC-POMDP} := (N, S, \mathbf{A}, \mathbf{O}, r, T, \revision{dt}, \gamma, \rho, p, P).
\end{equation}
The number of agents is $N$, $S$ is the state space of the environment, $\mathbf{A} := \times_{i=1}^N A^i$ is a joint action space, $\mathbf{O} := \times_{i=1}^N O^i$ is a joint observation space, $r(s,a) : S \times \mathbf{A} \rightarrow \mathbb{R}$ is a global reward function, $T$ is the (possibly infinite) horizon, \revision{$dt$ is the time discretization of the environment}, $\gamma \in [0,1]$ is a discount factor, and $\rho : S \rightarrow [0,1]$ is the initial state distribution.
The joint action $\mathbf{a} \in \mathbf{A}$ induces a transition from state $s$ to state $s'$ according to the transition function $p(s, \mathbf{a}) : S \times \mathbf{A} \rightarrow S$.
We assume that each agent is only given a noisy, partial observation of the state governed by an observation function $P(s,i) : S \times \mathbb{N} \rightarrow O^i$, where $i$ is the agent index, and that there is limited or no communication between agents.
Each agent learns a stochastic policy function $\pi^i(a^i \mid o^i)$ that maps an observation to a distribution over $A^i$, and the joint policy is $\bm{\pi} = \{\pi^i,\dots,\pi^N\}$.

\revision{In this work, we introduce multi-timescale Dec-POMDPs as an extension of Dec-POMDPs:
\begin{equation}
\label{eq:mt-dec-pomdp}
\texttt{MT-DEC-POMDP} := [\texttt{DEC-POMDP}; k, C],
\end{equation}
where $[;]$ is concatenation. The agents are defined with action frequencies $k := \{k_1,\dots,k_N\}$ and the environment is defined to be periodic with period $C \geq 1$, i.e., the reward function $r_t(s, \mathbf{a})$ and transition functions $p_t( s, \mathbf{a} )$ are $C$-periodic. 
When $C = 1$, the environment is aperiodic, as is typically assumed in a \texttt{DEC-POMDP}.
Each agent's timescale is defined by the action frequency $k_i$ times the base timescale discretization $dt$.
For example, agent 1 with $k_1 = 2$ acts every \revision{two steps} (i.e., when $t \bmod 2 dt = 0$) and agent 2 with $k_2 = 3$ acts every \revision{three steps} (i.e., when $t \bmod 3 dt = 0$).
We assume environments are defined with $dt = 1$ for simplicity in the remainder of the paper.
Between actions, ``slow'' agents take a null action $a^i_{\texttt{null}}$ or repeat the most recently taken action (e.g., when an action represents the setpoint for a device).}
Our setting differs from the multi-timescale MARL setting considered in Wu et al.~\cite{Wu2020} as they assume agents can communicate locally with their neighbors and that the environment is aperiodic.

\revision{One can observe that the sequencing of actions across agent timescales repeats periodically every $\Tilde{K} = \text{LCM}(k_1,\dots,k_N)$ time steps, where $\text{LCM}$ is the least common multiple.
Taking into account the periodicity of the environment $C$, we see that the pattern of action and state transition sequencing repeats every $K = \text{LCM}(\Tilde{K},C)$ steps. 
When $K > 1$, solving a \texttt{MT-DEC-POMDP} is generally more difficult than solving a \texttt{DEC-POMDP}, as the periodicity introduces a time dependency that compounds on top of the non-stationarity caused by partial observability and limited communication between agents~\cite{Oliehoek2008,Lowe2020}}.

To help motivate our framework, we now briefly describe a class of MARL environments where the optimal \revision{stationary} joint policy is sub-optimal (i.e., when the agents ignore time dependencies).
In detail, \revision{certain MARL environments cause agents to suffer} from what we call the \emph{observation aliasing problem}.
\revision{In Sec.~\ref{sec:movebox}, we provide an example multi-timescale environment exhibiting observation aliasing.}
This aliasing problem, which is closely related to the state aliasing problem that can arise in single agent RL~\cite{pardo2018time}, occurs when agent $i$ receives a specific partial observation $o^i$ that has a different action under \revision{non-stationary} policy $\pi^i$ depending on the time step:
\begin{equation*}
    \label{eq:alias}
    \exists t,t' \text{ with } t<t' \text{ s.t. } \pi_t^i(o^i_t) \neq \pi_{t'}^i( o^i_{t'} ).
\end{equation*}
A \emph{time-unaware} agent is not able to learn a stationary policy that distinguishes between $o^i_t$ and $o^i_{t'}$ due to the impossibility of proper credit assignment.
Intuitively, time-unaware agents perceive time-dependent rewards or dynamics as stochasticity in the environment (e.g., from an exogenous source).
Consequentially, these agents learn suboptimal policies that try to explain the aliased observation as perceived randomness.

A common heuristic is to make each agent time aware by appending the current time step $t$ to the observation.
Alternatively, recurrent neural networks can be used to infer a \emph{belief state} for each agent based on its history~\cite{wang2020r}.
In this work, we explore how information about agent timescales and environment periodicity can be used to more effectively learn non-stationary policies in \texttt{MT-DEC-POMDP}s.

Recent work has introduced custom multi-timescale solutions for power systems problems that simply \revision{attempt to learn a stationary policy~\cite{wu2023multi}} or use recurrent neural networks to model temporal information for a fast agent and a slow agent~\cite{Ochoa2022}.
The single agent RL setting with multiple action frequencies is also a closely related setting.
Multi-timescale MDPs (MMDPs)~\cite{HSC2003,He2011} involve an agent formulated as a hierarchical MDP with fast and slow actions.
MMDPs aim to learn a (top-down) hierarchy over action timescales, in the sense that actions taken on slower timescales influence the actions taken on faster timescales, but not vice versa.
Also related is a setting where a factored-action MDP agent can choose actions that persist for various lengths of time~\cite{lee2020reinforcement}.

\section{$K$-periodic Non-stationary Joint Policies}
\label{sec:mt}
We now demonstrate that, under simplifying assumptions, policy iteration in the space of \emph{$K$-periodic} non-stationary joint policies converges to the optimal multi-timescale policy and value function. 
\begin{definition}
For any $K\geq 1$, a $K$-periodic non-stationary joint policy satisfies
\begin{equation}
    \bm{\bar{\pi}}(\mathbf{o}_t) = \bm{\bar{\pi}}(\mathbf{o}_{t+{K}}).
\end{equation}
\end{definition}
Let $\bm{\bar{\Pi}}_{K} = \big\{ \bm{\bar{\pi}}: \bm{\bar{\pi}}(\mathbf{o}_t) = \bm{\bar{\pi}}(\mathbf{o}_{t+K})\big\}$ be the set of all such policies.
We use this to define a \emph{multi-timescale non-stationary joint policy} $\bm{\pi}_t \in \bm{\Pi}$ with timescale action frequencies $k$ as the policy \emph{induced} by a $K$-periodic non-stationary policy:
\begin{align}
   &\forall t, \bm{\pi}_t = \{\pi_t^1,\dots,\pi_t^N\}\text{ s.t. } \\
   &\pi_t^i := \begin{cases}\bar{\pi}_t^i \hspace{.5em} &\text{if } t  \bmod k_i = 0\\ \delta_{t-(t \bmod k_i)}(a^i) \text{ or } \delta_{t}(a^i_{\texttt{null}})\hspace{.5em} &\text{otherwise.} \end{cases}\nonumber
\end{align}
Here, $\delta_t$ is the Dirac delta function. In general, the inducing policy for a multi-timescale policy $\bm{\pi}_t$ does not need to be $K$-periodic. However, later in this section we prove that policy iteration in the space of $K$-periodic policies converges to the optimal multi-timescale policy assuming cooperative agents and full observability.
\begin{definition}
The projection operator for joint actions $\Gamma_t^k (\mathbf{a}_t)$ is defined as 
\begin{align}
&\Gamma_t^k(\mathbf{a}_t) = \{\bar{a}^1_t,\dots, \bar{a}^N_t\} \text{, where }\\
&\bar{a}^i_t := 
\begin{cases} a^i_t \hspace{.5em} &\text{if } t \bmod k_i = 0 \\ \bar{a}^i_{t-1} \text{ or } a^i_{\texttt{null}} \hspace{.5em} &\text{otherwise.} \end{cases}
\end{align}
\end{definition}
Notice that $\Gamma_t^k$ is $\Tilde{K}$-periodic. 
That is, $\Gamma_t^k(\mathbf{a}) = \Gamma_{t+ \Tilde{K}}^k(\mathbf{a})$ for any $t$ and joint action $\mathbf{a}$.
\begin{definition}
The projection operator for the reward and transition function is defined as
\begin{equation}
    \Theta_t^C ( f_t ) = \begin{cases} f_t^0 \hspace{.5em}&\text{if } t \bmod C = 0\\ \vdots\\
    f_t^{C-1} \hspace{.5em}& \text{if } t \bmod C = C-1.\end{cases}
\end{equation}
\end{definition}
Notice that $\Theta_t^C$ is $C$-periodic, i.e., $\Theta_t^C(r(s,\mathbf{a})) = \Theta_{t+C}^C(r(s,\mathbf{a}))$ for any $t$ and $(s,\mathbf{a})$.

The agents aim to learn an optimal multi-timescale policy \begin{equation}
    \bm{\pi}^* = \argmax_{\bm{\pi} \in \Pi} \mathbb{E}_{\bm{\pi},\Theta_t^C(p_t(s,a))} \Biggl[ \sum_{t=0}^T \gamma^t \Theta_t^C \bigl( r_t(s_t, \Gamma_t^k(\mathbf{a}_t)) \bigr) \Biggr],
\end{equation}
and the optimal multi-timescale value function is 
\begin{equation}
Q_t^{\bm{\pi}^*}(a, s) = \mathbb{E}_{\mathbf{a_{t+1}} : \mathbf{a_T}, s_{t+1}:s_T} \bigl[ R_t | \Gamma_t^k(\mathbf{a_t}), s_t, \bm{\pi}^* \bigr],
\end{equation}
where $R_t = \sum_{\tau=t}^T \gamma^{\tau - t} \Theta^C_{\tau} r_{\tau}(s_{\tau}, \mathbf{a}_{\tau})$.
The following theorem establishes convergence to the optimal multi-timescale value function via \emph{policy iteration in the space of $K$-periodic non-stationary policies} assuming agent cooperation and full observability.

\begin{assumption}
\label{assumption}
The agents are cooperative and have full observability of the environment.
\end{assumption}

\begin{theorem}
\label{thm:main}
Under assumption 1, for any $\bm{\pi}^0 \in \Pi$ induced by $K$-periodic non-stationary policy $\bar{\bm{\pi}}^0 \in \bar{\Pi}_K$, $K = \text{LCM}(\Tilde{K},C)$, the sequence of value functions $Q^{\bm{\pi}^n}$ and improved policies $\bm{\pi}^{n+1}$ due to policy iteration converges to the optimal multi-timescale value function and optimal multi-timescale policy $\bm{\pi}^*$, i.e., $Q^{\bm{\pi}^*}(s,a) = \lim_{n \rightarrow \infty} Q^{\bm{\pi}^n}(s,a) \geq Q^{\bm{\pi}}(s,a)$. Moreover, the optimal multi-timescale value function always exists and is induced by a $K$-periodic non-stationary policy.
\end{theorem}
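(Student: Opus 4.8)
The plan is to reduce the statement to a known convergence result for policy iteration on a time-augmented MDP. First I would observe that under Assumption~1 (cooperative agents, full observability), the \texttt{MT-DEC-POMDP} collapses to a single-agent MDP whose action is the joint action $\mathbf{a}$ and whose ``effective'' dynamics and reward at time $t$ are $\Theta_t^C \circ p_t$ and $\Theta_t^C \circ r_t \circ \Gamma_t^k$. The key structural fact, already noted in the excerpt, is that $\Gamma_t^k$ is $\tilde K$-periodic and $\Theta_t^C$ is $C$-periodic, so the composed time-dependent MDP is $K$-periodic with $K = \mathrm{LCM}(\tilde K, C)$. I would then define the \emph{augmented state} $\hat s_t := (s_t, t \bmod K)$, so that the augmented transition kernel and reward are \emph{stationary} functions of $\hat s_t$. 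A stationary (deterministic, Markov) policy on the augmented MDP is exactly a $K$-periodic non-stationary policy on the original one, and the induced multi-timescale policy is recovered by applying $\Gamma_t^k$; this sets up a bijection between $\bar\Pi_K$ (composed with the inducing map) and stationary policies on $\hat S$.

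Next I would invoke the classical policy iteration guarantees on the augmented MDP. For the discounted case ($\gamma < 1$, or finite $T$), the Bellman optimality operator on the augmented state space is a $\gamma$-contraction on a complete space (finite or suitably bounded $\hat S$), so it has a unique fixed point $Q^*$, policy iteration produces a monotone improving sequence $Q^{\bm\pi^n}$, and it converges to $Q^*$; the greedy policy at the fixed point is optimal among \emph{all} policies on the augmented MDP. I would then need the standard argument that on the augmented MDP it suffices to optimize over stationary Markov policies — i.e., there is no loss in restricting to $K$-periodic policies on the original problem — which follows because the augmented MDP is a bona fide (stationary) MDP and the augmented state carries exactly the information $\Gamma_t^k$ and $\Theta_t^C$ depend on. Translating back: the limit $Q^{\bm\pi^*}$ is the optimal multi-timescale value function, it dominates $Q^{\bm\pi}$ for every $\bm\pi \in \Pi$, it exists, and it is attained by an induced $K$-periodic policy.

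I would carry out the steps in this order: (1) state the reduction to a single-agent MDP under Assumption~1; (2) verify $K$-periodicity of the composed reward/transition via periodicity of $\Gamma_t^k$ and $\Theta_t^C$; (3) construct the augmented MDP with state $(s, t \bmod K)$ and check it is stationary; (4) establish the bijection between stationary augmented policies and induced $K$-periodic policies, with matching value functions; (5) apply the contraction/monotonicity theorem for policy iteration on the augmented MDP; (6) transfer optimality, existence, and the $K$-periodic realizability back to the original problem. The main obstacle I anticipate is step (4) together with the ``no loss of optimality'' claim: one must argue carefully that the projection operators $\Gamma_t^k$ (action persistence / null actions) and the reward selector $\Theta_t^C$ depend on the past only through $t \bmod K$, so that no longer memory than the $K$-cycle phase is ever needed — otherwise the optimal policy might in principle be non-$K$-periodic and the equivalence would break. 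A secondary technical point is handling the infinite-horizon, possibly undiscounted ($\gamma = 1$) case, where I would either restrict to $\gamma < 1$ or assume the finite-horizon $T$ and note the augmented problem is then a finite acyclic MDP for which policy iteration trivially converges.
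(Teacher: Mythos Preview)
Your plan is correct and takes a genuinely different route from the paper. The paper does not pass through an augmented stationary MDP; instead it works intrinsically with the time-indexed one-step Bellman optimality operators $\bar{\mathcal{T}}_t^{*}$, composes $K$ of them into $K$-step operators $\bar{H}_t^{*}$, proves each $\bar{H}_t^{*}$ is a $\gamma^K$-contraction with a unique fixed point $Q_t^{*}$, then shows these fixed points dominate the value of any history-dependent policy, and finally appeals to a monotone-improvement argument (adapted from Lee et al.) to conclude policy iteration converges. Your approach instead folds the phase $t \bmod K$ into the state and invokes classical policy-iteration guarantees on the resulting stationary MDP, then transports the conclusion back via the bijection between stationary augmented policies and $K$-periodic policies.

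What each approach buys: your state-augmentation argument is conceptually cleaner and lets you import existence, monotone improvement, and optimality over all (history-dependent) policies in one stroke from standard MDP theory; the paper's composed-operator argument is more self-contained, stays close to the action-persistence framework of Lee et al., and makes the $\gamma^K$ contraction rate explicit (hinting at how the effective horizon scales with $K$). On the obstacle you flag: when $\Gamma_t^k$ uses action repetition rather than $a_{\texttt{null}}^i$, the projection depends not only on $t \bmod K$ but on the action chosen at the last decision epoch for each slow agent; the paper handles this by threading the previous joint action through the operator (writing $\Gamma_{t,\mathbf{a}_{t-1}}^k$ inside the proofs), whereas in your construction the clean fix is to enlarge the augmented state to $(s,\, t \bmod K,\, \text{currently persisted slow actions})$, after which your step~(4) goes through without further difficulty.
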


\begin{lemma}
\label{lemma:eq}
Under assumption~\ref{assumption}, \texttt{MT-DEC-POMDP} reduces to a multi-timescale multi-agent MDP~\cite{boutilier1996planning}. Furthermore, a multi-timescale multi-agent MDP is equivalent to an action-persistent factored action (FA) MDP~\cite{lee2020reinforcement}.
\end{lemma}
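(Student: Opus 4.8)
The plan is to establish the two claims in sequence. For the first, I would unpack Assumption~\ref{assumption}: \emph{full observability} means the observation map $P(s,i)$ is injective in $s$ for every agent $i$, so each agent effectively observes the true state $s$; \emph{cooperation} means the agents jointly maximize the single global reward $r_t$ already present in the tuple~\eqref{eq:dec-pomdp}, rather than pursuing individual objectives. Under these two conditions the decentralization and partial-observability obstacles disappear --- the belief state collapses onto the true state, so agent histories are not needed for optimal control --- and the classical reduction (Boutilier~\cite{boutilier1996planning}) that a Dec-POMDP with shared reward and full state observability is a multi-agent MDP applies directly. Keeping the extra components $k$ and $C$ from~\eqref{eq:mt-dec-pomdp} attached, the resulting object is, by definition, a multi-timescale multi-agent MDP: a multi-agent MDP whose agents act at frequencies $k_i$ and whose reward and transition functions are $C$-periodic.

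For the second claim I would exhibit an explicit correspondence between multi-timescale multi-agent MDPs and action-persistent factored-action MDPs~\cite{lee2020reinforcement}. Take the $N$ agents to be the $N$ action factors, so the joint action space $\mathbf{A}=\times_{i=1}^N A^i$ is literally a factored action space; assign factor $i$ the control frequency $k_i$, and note that ``repeat the most recently taken action or take the null action $a^i_{\texttt{null}}$ between decisions'' is exactly the action-persistence semantics, which is precisely what the operator $\Gamma_t^k$ formalizes. The one remaining mismatch is the environment period $C$: I would remove it by augmenting the state with the phase $\tau = t \bmod K$, $K=\text{LCM}(\Tilde{K},C)$, so that $\Theta_t^C(r_t)$ and $\Theta_t^C(p_t)$ become \emph{stationary} functions of the augmented state $(s,\tau)$ (equivalently, one argues that the action-persistent FA-MDP framework tolerates known-period nonstationarity natively). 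I would then check that this map is a bijection at the level of dynamics and rewards, that any policy of one model --- in particular any $K$-periodic one --- transports to a policy of the other with identical value, and hence that their Bellman optimality operators coincide, which is what ``equivalent'' means here.

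I expect the main obstacle to be the bookkeeping in this last correspondence rather than any conceptual difficulty. One must verify step-for-step that the multi-rate, periodically sequenced action schedule encoded jointly by $\Gamma_t^k$ and $\Theta_t^C$ matches the action-persistence definition, including the edge cases: the null-action versus repeated-action convention, the assumption that every agent's decision epochs are phase-aligned at $t=0$, and the fact that the phase augmentation enlarges the state space without changing reachable dynamics or returns. A secondary subtlety is calibrating the rigor of the first part to what Theorem~\ref{thm:main} needs downstream: it suffices to invoke Boutilier's result in the form ``an optimal policy of the reduced MDP can be taken Markov, and given the period-$K$ structure $K$-periodic,'' after which policy iteration on the FA-MDP --- an ordinary MDP once the phase sits in the state --- yields the claimed convergence when pulled back through the correspondence.
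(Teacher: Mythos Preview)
Your proposal is correct and follows essentially the same approach as the paper: collapse the Dec-POMDP to a multi-agent MDP via Assumption~\ref{assumption} (full observability making $P(s,i)$ trivial, cooperation giving a shared reward), retain $k$ and $C$, then identify the $N$ agents with the $N$ action factors and the timescale constraint with the action-persistence semantics of~\cite{lee2020reinforcement}. The one minor difference is that the paper handles the environment period $C$ by directly extending the action-persistent FA-MDP definition to $C$-periodic rewards and transitions rather than by phase-augmenting the state --- the alternative you mention parenthetically --- and your bookkeeping plan is in fact more detailed than what the paper carries out.
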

\begin{proof}
A multi-agent MDP can be described by the tuple $\{S, N, \mathbf{A}, p, r \}$ with elements defined as in Sec.~\ref{sec:bg}~\cite{boutilier1996planning}.
A multi-timescale multi-agent MDP is defined similarly to \texttt{MT-DEC-POMDP} (Eq.~\eqref{eq:mt-dec-pomdp}), i.e., by expanding a multi-agent MDP with $k$ and $C$.
Assuming agent cooperation, \texttt{MT-DEC-POMDP} has a single reward shared by all agents.
Assuming full observability, the observation function $P(s,i)$ in a \texttt{MT-DEC-POMDP} is the identity mapping.
Therefore, each agent's policy in \texttt{MT-DEC-POMDP} becomes $\pi^i(a^i | s)$, which completes the reduction. 
A factored action (FA) MDP with a fully factorized policy over $N$-dimensional actions $\pi(a^1, \dots, a^N | s) = \prod_{n=1}^N \pi^n (a^n | s)$ is a single agent MDP that can be thought of as equivalently having $N$ agents, i.e., a multi-agent MDP.
A $k$-persistent FA MDP assumes that action $a^i$ is decided every $k_i$ steps and repeated otherwise~\cite{lee2020reinforcement}, and can similarly be extended to periodic environments with period $C$.
By modifying the persistence property to allow action $a^i$ to be repeated $k_i$ times \emph{or} for a null action $a^i_{\texttt{null}}$ to be subsequently taken $k_i - 1$ times, the $k$-persistent FA MDP is equivalent to the multi-timescale multi-agent MDP setting, completing the proof. 
\end{proof}

Given Lemma~\ref{lemma:eq}, our proof for Theorem~\ref{thm:main} follows the same proof technique used to prove Theorem 3 in Lee et al.~\cite{lee2020reinforcement}.
Differently, our proof handles $C$-periodic environments. 
To that end, we define here the one-step multi-timescale Bellman \emph{optimality} operator $\bar{\mathcal{T}}_t^{*}$ induced by $\bm{\bar{\pi}}$ for $t \in \{0, \dots, K - 1\}$:
\begin{align}
    &(\bar{\mathcal{T}}_t^{*} Q)(s, \mathbf{a}) := \\
    &\Theta_t^C(r_t(s_t, \mathbf{a}_t)) + \gamma \mathbb{E}_{s_{t+1}\sim \Theta_t^C( p_t ) }\bigl[ \max_{\mathbf{a}_{t+1}} Q(s_{t+1}, \Gamma_t^k(\mathbf{a}_{t+1}) \bigr].\nonumber
\end{align}
Notice that $\bar{\mathcal{T}}_t^{*}$ is $K$-periodic due to the $\Tilde{K}$-periodic action projection $\Gamma_t^k$ and the $C$-periodic projection operator $\Theta_t^C$. 
Thus, $\bar{\mathcal{T}}_t^{*} Q = \bar{\mathcal{T}}_{t+K}^{*} Q$ for any $t$ and $Q$. Now, we define the $K$-step multi-timescale Bellman optimality operator $\bar{H}_t^*$ by composing one-step Bellman optimality operators as follows:
\begin{align}
    (\bar{H}_0^* Q) (s,a) &:= (\bar{\mathcal{T}}_0^{*}\bar{\mathcal{T}}_1^{*} \cdots \bar{\mathcal{T}}_{K-2}^{*} \bar{\mathcal{T}}_{K-1}^{*} Q)(s,a)\\
    (\bar{H}_1^* Q) (s,a) &:= (\bar{\mathcal{T}}_1^{*}\bar{\mathcal{T}}_2^{*} \cdots \bar{\mathcal{T}}_{K-1}^{*} \bar{\mathcal{T}}_{K}^{*} Q)(s,a) \nonumber\\
    &\phantom{:}\vdots \nonumber\\ 
    (\bar{H}_{K-1}^* Q) (s,a) &:= (\bar{\mathcal{T}}_{K-1}^{*}\bar{\mathcal{T}}_K^{*} \cdots \bar{\mathcal{T}}_{K-3}^{*} \bar{\mathcal{T}}_{K-2}^{*} Q)(s,a).\nonumber
\end{align}
The next lemma establishes that $K$-step multi-timescale Bellman optimality operators are a contraction mapping.
\begin{lemma}
For all $t \in \{ 0, \dots, K-1\}$, the $K$-step multi-timescale Bellman optimality operator $\bar{H}_t^*$ is a $\gamma^K$-contraction with respect to infinity norm with $\bar{H}_t^* Q_t^* = Q_t^*$ as the unique fixed point solution. That is, for any $Q_t^0$, define $Q_t^{n+1} = \bar{H}_t^* Q_t^n$. Then, the sequence $Q_t^n$ converges to the $t \textsuperscript{th}$ multi-timescale optimal value function as $n \rightarrow \infty$.
\label{lemma:1}
\end{lemma}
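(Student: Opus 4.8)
The plan is to follow the classical recipe for Bellman-type fixed-point results: first show the one-step operators are $\gamma$-contractions, then lift this to the $K$-step composition, apply Banach's fixed-point theorem for existence, uniqueness, and geometric convergence, and finally identify the fixed point with the $t$-th multi-timescale optimal value function. Throughout we work in the Banach space of bounded $Q$-functions equipped with $\|\cdot\|_\infty$; boundedness of the reward guarantees that all value functions involved are bounded and that this space is complete.

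First I would verify that each one-step operator $\T{t}$ is a $\gamma$-contraction. Writing out $(\T{t}Q_1)(s,\mathbf{a}) - (\T{t}Q_2)(s,\mathbf{a})$, the reward term $\Theta_t^C(r_t(s,\mathbf{a}))$ cancels and the transition law $\Theta_t^C(p_t)$ is common to both, leaving $\gamma\,\mathbb{E}_{s_{t+1}\sim\Theta_t^C(p_t)}\big[\max_{\mathbf{a}_{t+1}}Q_1(s_{t+1},\Gamma_t^k(\mathbf{a}_{t+1})) - \max_{\mathbf{a}_{t+1}}Q_2(s_{t+1},\Gamma_t^k(\mathbf{a}_{t+1}))\big]$. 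The map $Q \mapsto \max_{\mathbf{a}_{t+1}} Q(\cdot,\Gamma_t^k(\mathbf{a}_{t+1}))$ is a maximum over the fixed set $\mathrm{range}(\Gamma_t^k)$ of coordinates of $Q$, hence $1$-Lipschitz in $\|\cdot\|_\infty$; the expectation is a convex combination, hence also a non-expansion. The factor $\gamma$ then yields $\|\T{t}Q_1 - \T{t}Q_2\|_\infty \le \gamma\|Q_1 - Q_2\|_\infty$.

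Next, because $\bH{t} = \T{t}\,\T{t+1}\cdots\T{t+K-1}$ with indices read mod $K$ (using the already-noted periodicity $\T{t} = \T{t+K}$), a composition of $K$ $\gamma$-contractions is a $\gamma^K$-contraction. Banach's fixed-point theorem then gives a unique $Q_t^*$ with $\bH{t}Q_t^* = Q_t^*$, and for any $Q_t^0$ the iterates $Q_t^{n+1} = \bH{t}Q_t^n$ satisfy $\|Q_t^n - Q_t^*\|_\infty \le \gamma^{Kn}\|Q_t^0 - Q_t^*\|_\infty \to 0$. To finish, I would identify $Q_t^*$ with the $t$-th multi-timescale optimal value function: the cyclic composition gives the coupled system $Q_t^* = \T{t}Q_{(t+1)\bmod K}^*$ for all $t$, i.e., $\{Q_0^*,\dots,Q_{K-1}^*\}$ solves the $K$-periodic Bellman optimality equations, and by Lemma~\ref{lemma:eq} the problem is (under Assumption~\ref{assumption}) an action-persistent FA MDP with $C$-periodic reward and transitions; augmenting the state with the phase $\tau = t \bmod K$ turns it into a stationary MDP whose optimal value function, restricted to phase-$\tau$ states, is precisely the unique solution of that periodic system, so it coincides with $Q_t^*$.

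I expect the main obstacle to be this last identification step rather than the contraction arithmetic. One must check that the phase-augmented MDP faithfully encodes both the persistence bookkeeping carried out by $\Gamma_t^k$ (the repeat-or-null rule, which effectively makes the committed actions part of the state) and the $C$-periodic switching carried out by $\Theta_t^C$, and that the ``max over $\mathbf{a}_{t+1}$, then project'' step inside $\T{t}$ matches the Bellman backup of that augmented MDP; once this correspondence is pinned down, uniqueness of the fixed point does the rest.
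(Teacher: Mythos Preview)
Your contraction argument is essentially the paper's: the paper also cancels the reward term, bounds the difference of maxima, and peels off one factor of $\gamma$ at a time, except it does this by directly unrolling $\bH{0}$ rather than first isolating ``each $\T{t}$ is a $\gamma$-contraction'' as a separate claim; both lead to the same $\gamma^K$-contraction and invoke Banach. The one substantive difference is your final identification step: the paper's proof of this lemma stops at ``unique fixed point $Q_t^*$'' and does \emph{not} argue, via a phase-augmented MDP or otherwise, that $Q_t^*$ equals the $t$-th multi-timescale optimal value function---that is effectively taken as the definition here, with the optimality content (domination of all $Q_t^\pi$) deferred to Lemma~\ref{lemma:2}. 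Your phase-augmentation argument is a correct and more self-contained way to close that gap, but it is extra work beyond what the paper supplies for this lemma.
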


\begin{proof}
Without loss of generality, it is sufficient to prove the case $t = 0$.

For any $Q_1, Q_2$ and $s_0 \in \mathcal{S}, \mathbf{a}_0 \in \mathbf{A}$,

\begin{align*}
&\phantom{ = }\lvert (\bH{0} Q_1)(s_0, \mathbf{a}_0) -  (\bH{0} Q_2 (s_0, \mathbf{a}_0) \rvert \\
&= \lvert (\T{0} \cdots \T{K-1} Q_1)(s_0, \mathbf{a}_0) - (\T{0} \cdots \T{K-1} Q_2)(s_0, \mathbf{a}_0) \rvert \\
&=  \Biggl{\lvert} \mathbb{E}_{s_1 \sim \Theta_0^C( p_0(s_0, \mathbf{a}_0))} \Bigl[ \Theta_0^C(r_0(s_0, \mathbf{a}_0)) \\
&\phantom{ = }+ \gamma \max_{\mathbf{a}_1}(\T{1} \cdots \T{K-1} Q_1)(s_1, \Gamma_{1,\mathbf{a}_0}^k(\mathbf{a}_1))\Bigr] \\
&\phantom{ = }- \mathbb{E}_{s_1 \sim \Theta_0^C( p_0(s_0, \mathbf{a}_0))} \Bigl[ \Theta_0^C(r_0(s_0, \mathbf{a}_0)) \\
&\phantom{ = }+ \gamma \max_{\mathbf{a}_1}(\T{1} \cdots \T{K-1} Q_2)(s_1, \Gamma_{1,\mathbf{a}_0}^k(\mathbf{a}_1))\Bigr] \Biggr{\rvert}\\
&= \gamma \Biggl{\lvert} \mathbb{E}_{\Theta_0^C(p_0)} \Bigl[ \max_{\mathbf{a}_1}(\T{1} \cdots \T{K-1} Q_1)(s_1, \Gamma_{1,\mathbf{a}_0}^k(\mathbf{a}_1))\\
&\phantom{ = }- \max_{\mathbf{a}_1}(\T{1} \cdots \T{K-1} Q_2)(s_1, \Gamma_{1,\mathbf{a}_0}^k(\mathbf{a}_1)) \Bigr] \Biggr{\rvert} \\
&\leq \gamma \Biggl{\lvert} \mathbb{E}_{\Theta_0^C(p_0)} \Bigl[ (\T{1} \cdots \T{K-1} Q_1)(s_1, \mathbf{a}_1^*)\\
&\phantom{ \leq }- (\T{1} \cdots \T{K-1} Q_2)(s_1, \mathbf{a}_1^*) \Bigr] \Biggr{\rvert}\\
&\phantom{ \leq }\text{ where } \mathbf{a}_1^* = \argmax_{\mathbf{a}} \Bigl[(\T{1} \cdots \T{K-1} Q_1)(s_1, \Gamma_{1,\mathbf{a}_0}^k(\mathbf{a}_1))\\
&\phantom{ \leq }- (\T{1} \cdots \T{K-1} Q_2)(s_1, \Gamma_{1,\mathbf{a}_0}^k(\mathbf{a}_1)) \Bigr]\\
&\leq \gamma \max_{s,\mathbf{a}} \Biggl{\lvert} (\T{1} \cdots \T{K-1} Q_1)(s, \mathbf{a})\\
&\phantom{ \leq }- (\T{1} \cdots \T{K-1} Q_2)(s, \mathbf{a}) \Biggr{\rvert}.
\end{align*}

We can continue to expand the inequality in a similar manner:
\begin{align*}
    &\forall s_0, \mathbf{a}_0,\\
    &\phantom{ \leq }\lvert (\bH{0} Q_1)(s_0, \mathbf{a}_0) - (\bH{0} Q_2)(s_0, \mathbf{a}_0) \rvert\\
    &\leq \gamma \max_{s,\mathbf{a}} \bigl{\lvert} (\T{1} \cdots \T{K-1} Q_1)(s, \mathbf{a}) - (\T{1} \cdots \T{K-1} Q_2)(s, \mathbf{a}) \bigr{\rvert}\\
    &\leq \gamma^2 \max_{s,\mathbf{a}} \bigl{\lvert} (\T{2} \cdots \T{K-1} Q_1)(s, \mathbf{a}) - (\T{2} \cdots \T{K-1} Q_2)(s, \mathbf{a}) \bigr{\rvert}\\
    &\phantom{\leq }\vdots\\
    &\leq \gamma^K \max_{s, \mathbf{a}} \bigl{\lvert} Q_1(s,\mathbf{a}) - Q_2(s, \mathbf{a}) \bigr{\rvert},
\end{align*}
which implies $||\bH{0}Q_1 - \bH{0}Q_2||_{\infty} \leq \gamma^K ||Q_1 - Q_2||_{\infty}$. 
Therefore $\bH{t}$ is a $\gamma^K$-contraction with respect to infinity norm, and by the Banach fixed-point theorem, $\bH{t} Q_t^* = Q_t^*$ is the unique fixed point solution for all $t$.
\end{proof}
It follows that the fixed points of $\bH{0},\dots,\bH{K-1}$ together make up the optimal multi-timescale value function, which is represented by the $K$ values $Q_0^{\pi^*},\dots,Q_{K-1}^{\pi^*}$.
Next, we show that these fixed points have the largest value compared to any other multi-timescale value function for any history-dependent policy $\bar{\pi} \in \Pi$.

\begin{lemma}
Let $\bH{t \bmod K} = \T{t \bmod K} \dots \T{(t + K - 1)\bmod K}$ be the $K$-step  multi-timescale Bellman optimality operator and $Q^{\pi^*}_{t \bmod K}$ be its fixed point. Then, for any history-dependent policy $\bar{\pi} \in \bar{\Pi}$, $Q^{\pi^*}_{t \bmod K}(s, \mathbf{a}) \geq Q_t^{\pi}(s, \mathbf{a})$.
\label{lemma:2}
\end{lemma}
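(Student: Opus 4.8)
The plan is to mirror the classical argument that the fixed point of a Bellman optimality operator dominates the return of every policy (the technique behind Theorem~3 of~\cite{lee2020reinforcement}), adapting it to the $K$-periodic multi-timescale setting. By Lemma~\ref{lemma:eq}, Assumption~\ref{assumption} places us in an MDP, so the only genuine complications are the non-stationarity (already tamed by the $K$-periodic structure) and the fact that $\bar{\pi}$ may be history-dependent. The two ingredients I would use are: (i) each one-step operator $\T{t}$ is \emph{monotone} ($Q_1 \le Q_2$ pointwise $\Rightarrow \T{t}Q_1 \le \T{t}Q_2$), hence so is any composition, in particular $\bH{t}$; and (ii) for every $Q$ and every action distribution, $\T{t}Q$ takes a maximum over $\mathbf{a}_{t+1}$ whereas the corresponding policy-evaluation step takes an expectation, so $\T{t}Q \ge \bar{\mathcal{T}}_t^{\bar{\pi}} Q$ pointwise, where $\bar{\mathcal{T}}_t^{\bar{\pi}}$ is the one-step evaluation operator obtained by replacing $\max_{\mathbf{a}_{t+1}}$ with $\mathbb{E}_{\mathbf{a}_{t+1}\sim\bar{\pi}}$.

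The cleanest way to dispatch a history-dependent $\bar{\pi}$ is a finite-horizon induction. For $n\ge 0$, let $Q^{*,(n)}_t := (\T{t}\T{t+1}\cdots\T{t+n-1}Q^0)(s,\mathbf{a})$ be the $n$-step-to-go optimal value (with $Q^{*,(0)}\equiv Q^0$ any fixed bounded function, e.g.\ $0$), and let $Q^{\bar{\pi},(n)}_t(s,\mathbf{a})$ be the truncated $n$-step return of $\bar{\pi}$ started from executing $\Gamma_t^k(\mathbf{a})$ in state $s$ at time $t$. I would prove $Q^{*,(n)}_t(s,\mathbf{a}) \ge Q^{\bar{\pi},(n)}_t(s,\mathbf{a})$ for all $t,s,\mathbf{a}$ by induction on $n$: the base case is immediate, and in the inductive step one expands $Q^{\bar{\pi},(n)}_t$ by one step, bounds the inner expectation over $\bar{\pi}$'s (history-conditioned) next action by a maximum over $\mathbf{a}_{t+1}$, and then invokes the inductive hypothesis — which applies precisely because $Q^{*,(n-1)}_{t+1}$ is history-free — together with monotonicity, recovering $\T{t}$ acting on $Q^{*,(n-1)}_{t+1}$. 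I expect this step to be the main technical obstacle: one must verify carefully that $Q^{*,(n-1)}$ genuinely depends on nothing but $(s,\mathbf{a})$ and $t \bmod K$, that the projection $\Gamma_t^k$ (including its dependence on the previously executed action in the ``repeat'' case) is threaded correctly through the recursion, and that the $C$-periodic operator $\Theta_t^C$ lines up so the composed $K$-step operator depends on $t$ only through $t\bmod K$ (the periodicity noted just after the definition of $\T{t}$).

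Finally I would pass to the limit $n\to\infty$. Along multiples of $K$, $Q^{*,(mK)}_t = (\bH{t\bmod K})^m Q^0$ by that $K$-periodicity, and by Lemma~\ref{lemma:1} ($\gamma^K$-contraction) this converges to the fixed point $Q^{\pi^*}_{t\bmod K}$; on the other side, $Q^{\bar{\pi},(mK)}_t \to Q^{\pi}_t$ as $m\to\infty$ since $\gamma<1$ and rewards are bounded (for $\gamma=1$ one instead takes $n=T-t$ directly, which is finite). Combining with the per-$n$ inequality yields $Q^{\pi^*}_{t\bmod K}(s,\mathbf{a}) \ge Q^{\pi}_t(s,\mathbf{a})$, as claimed. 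An equivalent route that avoids the horizon induction is to first invoke the classical fact that in an MDP every history-dependent policy is dominated stepwise by a Markov policy, which by $K$-periodicity of the dynamics may be taken $K$-periodic; then $Q^{\pi}_t$ is the unique fixed point of the $\gamma^K$-contraction $\bar{H}_t^{\bar{\pi}}$, and iterating $\bH{t}$ from $Q^{\pi}_t$ — which only increases it, by (i)--(ii) — converges to $Q^{\pi^*}_{t\bmod K}$, giving the same conclusion.
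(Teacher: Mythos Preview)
Your proposal is correct and follows essentially the same approach as the paper: establish the pointwise inequality $\bar{\mathcal{T}}_t^{\bar{\pi}} Q \le \T{t\bmod K} Q$ (your ingredient (ii)), compose and iterate using monotonicity to get $(\bar{\mathcal{T}}_t^{\bar{\pi}}\cdots\bar{\mathcal{T}}_{t+Kn-1}^{\bar{\pi}}Q) \le (\bH{t\bmod K})^n Q$, and pass to the limit via Lemma~\ref{lemma:1}. Your finite-horizon induction is a more careful spelling-out of exactly this chain, and you are somewhat more scrupulous than the paper about how history-dependence is handled.
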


\begin{proof}
For any $\bar{\pi} \in \bar{\Pi}$, $t, s, \mathbf{a}$ and $Q$, the following inequality holds: 
\begin{align*}
    & \phantom{ \leq }(\bar{\mathcal{T}}_{t}^{\bar{\pi}} Q)(s_t, \mathbf{a}_t) \\
    &:= \Theta^C_t(r_t(s_t,\mathbf{a}_t))\\
    &\phantom{ \leq }+ \gamma \mathbb{E}_{s_{t+1} \sim \Theta^C_t(p_t), \mathbf{a}_{t+1} = \bar{\pi}} \bigl[ Q(s_{t+1}, \Gamma^k_{{t+1},\mathbf{a}_{t}}(\mathbf{a}_{t+1}) \bigr]\\
    &\leq \Theta^C_t(r_t(s_t,\mathbf{a}_t))\\
    &\phantom{ \leq }+ \gamma \max_{\mathbf{a}_{t+1}} \mathbb{E}_{s_{t+1} \sim \Theta^C_t(p_t)} \bigl[ Q(s_{t+1}, \Gamma^k_{t+1}(\mathbf{a}_{t+1}) \bigr]\\
    &= (\T{t \bmod K} Q)(s_t, \mathbf{a}_t).
\end{align*}
This implies
\begin{align*}
    &\phantom{ \leq  }(\bar{\mathcal{T}}_{t}^{\bar{\pi}} \bar{\mathcal{T}}_{t+1}^{\bar{\pi}} \dots \bar{\mathcal{T}}_{t + K-1}^{\bar{\pi}} Q)(s,\mathbf{a})\\
    &\leq (\T{t \bmod K} \T{(t+1) \bmod K} \dots \T{(t+K-1) \bmod K}Q)(s,\mathbf{a})\\
    &= (\bH{t \bmod K} Q)(s,\mathbf{a}).
\end{align*}
Therefore, 
\begin{align*}
&\phantom{ = }Q^{\pi}_t(s,\mathbf{a})\\
&= \lim_{n\rightarrow \infty} (\bar{\mathcal{T}}_{t}^{\bar{\pi}} \bar{\mathcal{T}}_{t+1}^{\bar{\pi}} \dots \bar{\mathcal{T}}_{t + Kn-1}^{\bar{\pi}} Q)(s,\mathbf{a})\\
&\leq \lim_{n \rightarrow \infty} ((\bH{t \bmod K})^n Q)(s,\mathbf{a}) = Q_t^{\pi^*}(s,\mathbf{a})
\end{align*}
holds, which concludes the proof.
\end{proof}

Finally, to prove the main claim in Theorem~\ref{thm:main}, by Lemma~\ref{lemma:2} it is sufficient to show $\lim_{n \rightarrow \infty}Q_t^{\pi^n} = Q_t^{\pi^*}$ for all $t \in \{0,\dots,K-1\}$.
To prove this, first, we must establish monotonic improvement of multi-timescale policies induced by $\bar{\pi}^n$ during policy iteration, i.e., $Q_t^{\pi^{n+1}} (s, \mathbf{a}) \geq Q_t^{\pi^{n}}(s, \mathbf{a})$ always holds for all $t, s, \mathbf{a}, n$. This result follows by proving a similar result to Theorem 2 in Lee et al.~\cite{lee2020reinforcement} in the same manner as the proofs for Lemmas~\ref{lemma:1} and~\ref{lemma:2}, i.e., by projecting the reward and transition functions with $\Theta^C_t$.
We refer to~\cite{lee2020reinforcement} for the details. 
When the policy is no longer improving, i.e., $\pi^{n+1} = \pi^n$ and $Q_t^{\pi^{n+1}} = Q_t^{\pi^n}$, then by definition, $Q_t^{\pi^n}$ satisfies the multi-timescale Bellman optimality equation.
By Lemma~\ref{lemma:1}, the multi-timescale Bellman optimality equation has a unique solution $Q_t^{*}$, implying $Q_t^{*} = Q_t^{\pi^n}$. Thus, $\pi^n$ is the optimal multi-timescale policy.
Moreover, at every step of policy iteration, we have a policy $\bar{\pi}^n$ which is in $\bar{\Pi}$ and which induces a multi-timescale policy $\pi^n$ that is guaranteed to eventually converge to the optimal multi-timescale policy.
Therefore, the optimal multi-timescale policy always exists.

The known inequality $Q^*_{\text{Dec-POMDP}} \leq Q^*_{\text{ MDP }}$ establishes that the optimal value function of a  multi-timescale multi-agent MDP is an upper bound of the optimal value function for \texttt{MT-DEC-POMDP}. 
This implies that the optimal value function obtained by policy iteration under Assumption 1 may overestimate the true optimal value function.
See Theorem 5.1 in Oliehoek et al.~\cite{Oliehoek2008} for proof details. 

\section{Phasic Policy Gradient Method}
\label{sec:tdc}

\revision{The theory from the previous section suggests encoding $K$-periodicity into learning algorithms for \texttt{MT-DEC-POMDP} to encourage learning the optimal policy when $k$ and $C$ are known. Let the integer $\triangle_t \in \{0,\dots,K-1\}$ indicate the \revision{current \emph{phase}, i.e., $t \bmod K$}. A straightforward approach} is to encode this phase as a one-hot vector of size $K$, $\texttt{O.H.}(\triangle)$, which can be concatenated to each agent's observation \revision{$[o^i_t ; \texttt{O.H.}(\triangle_t)]$}.
However, when the mapping between the phase-augmented observation and the optimal action is complex, encoding the phase as a one-hot vector may not be sufficient to provide a good inductive bias for \revision{$K$-periodicity.}

Alternatively, we can parameterize each agent with phase-functioned neural networks~\cite{Holden2017} (PFNNs).
PFNNs are spline-based neural architectures whose weights smoothly vary as a function of the current phase.
This provides an inductive bias of using similar weights for adjacent phases and reusing the same network weights at time steps separated by a specified period.
\textbf{\revision{Favorably, t}he number of parameters in PFNNs scales proportionally with the number of spline control points (a constant) and \emph{not} with the period $K$}. 
The use of PFNNs in RL is under-explored, with only one known previous use for training single agents in cyclic environments~\cite{Sharma}.
\revision{Each layer $l$ of a PFNN has} a weight matrix $\alpha$ computed by a \emph{phase function} $\alpha_l = \Theta(\beta_l; \revision{ 2\pi \triangle_t/K)}$ conditioned on learnable weight matrices $\beta_l$ and phase \revision{ $2\pi \triangle_t/K \in [0, 2\pi]$}.
Following~\cite{Holden2017}, we use a Catmull-Rom spline for $\Theta$, which is a cubic spline with \textbf{four} learnable spline control points $\beta_l = [\beta_l^0,\beta_l^1,\beta_l^2,\beta_l^3]$.
The weight for layer $l$ is 
\begin{align}
    \alpha_l &= \beta_l^{x_1} + w (\frac{1}{2} \beta_l^{x_2} - \frac{1}{2} \beta_l^{x_0}) \nonumber\\
    &+ w^2 ( \beta_l^{x_0} - \frac{5}{2} \beta_l^{x_1} + 2 \beta_l^{x_2} - \frac{1}{2} \beta_l^{x_3}) \nonumber\\
    &+ w^3 (\frac{3}{2} \beta_l^{x_1} - \frac{3}{2} \beta_l^{x_2} + \frac{1}{2} \beta_l^{x_3} - \frac{1}{2} \beta_l^{x_0}), \nonumber
\end{align}
where \revision{$w = 4 \triangle_t / K \hspace{.5em}(\texttt{mod } 1)$} and \revision{$x_n = \lfloor 4 \triangle_t / K \rfloor + n - 1\hspace{.5em}(\texttt{mod } 4)$}.
The bias for layer $l$ is computed in a similar fashion.
The start and end control points for each layer are the same, making each PFNN layer cyclic. In this work, we adapt the actor-critic policy gradients method COMA~\cite{Foerster2017} by using PFNNs with period $K = \text{LCM}(\Tilde{K},C)$ for the actor and critic networks.
\section{Experiments}
\label{sec:experiments}
\subsection{The Move Box Problem}
\label{sec:movebox}
\textbf{Setup: }We adapted a gridworld environment called Move Box~\cite{jiang2021multi} to create a toy multi-timescale environment with a time-dependent optimal policy.
That is, the two agents need to coordinate their actions to push a green box to a goal location (``G'') within a maximum of 20 steps.
In the \textbf{easy} version, one agent (red) is a ``fast'' agent that uses $k_1 = 1$ and one agent (blue) is a ``slow'' agent that acts every two steps $k_2 = 2$. 
The period provided to time-aware agents is therefore $K := \texttt{LCM}(1, 2, C = 1) = 2$.
For the \textbf{hard} version of this task, the fast agent uses $k_1 = 2$ and slow agent uses $k_2 = 3$, thus $K := \texttt{LCM}(2, 3, C = 1) = 6$.
Each agent receives a 4D partial observation consisting of its own position and the position of the box.
To avoid the need for a complex exploration strategy, we restrict the action space to 3 discrete actions: move up, move down, or null (do nothing).
To push the green box up or down, the agents have to be on either side of the box and move in the same direction at the same time.
Agents receive a reward of +1 whenever the box moves towards the goal and a reward of +20 once the box reaches the goal.
The easy version can be solved with 7 actions while the hard version requires 19 actions; there is a significant increase in exploration difficulty between the easy and hard versions.
To implement multiple timescales, the only legal action available to the slow agent between steps is the null action.
\emph{Move Box is designed so that a time-unaware fast agent suffers from observation aliasing} (Sec.~\ref{sec:bg}).
A time-unaware fast agent's limited information means it cannot precisely determine whether to move up or do nothing to synchronize its actions with the slow agent.

\textbf{Agents: }We train four COMA-based agents:
\begin{itemize}
    \item \texttt{Basic COMA}, a time-unaware agent that uses a feedforward neural network for the actor and critic networks.
    \item \texttt{Recurrent COMA}, a time-aware agent that uses an RNN for the actor and critic networks to condition on the full history up to the current time step\revision{~\cite{wang2020r}}.
    \item \texttt{One-Hot (O.H.) phase-aware COMA}, an agent whose observations are augmented with a one-hot encoding of the current phase $\triangle_t$.
    \item \texttt{Phasic COMA}, an agent whose actor and critic networks are PFNNs with weights indexed by $2 \pi \triangle_t/K$.
\end{itemize}
\revision{We also run a variant of PFNN with period 4 instead of $K$, \texttt{Phasic COMA (4)}, to explore performance sensitivity to this hyperparameter.}
In the easy environment $4 > K$ and in the hard environment $4 < K$.
All actor and critic networks share parameters.
We take the standard approach of providing a one-hot agent ID as an auxiliary input to distinguish between agents.
\begin{figure}[t]
    \centering
    \includegraphics[width=.77\columnwidth]{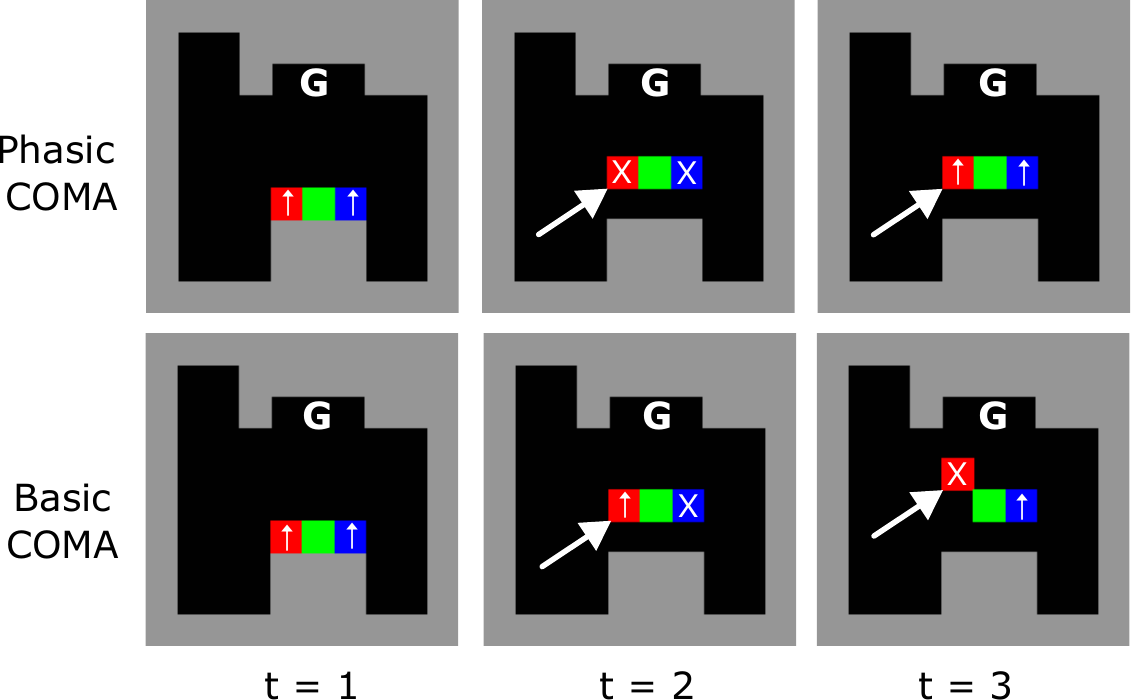}
    \caption{\textbf{Move Box qualitative analysis}. In the easy setting, the red agent acts every step ($k_1 = 1$) and the blue agent acts every two steps ($k_2 = 2$). The joint action is shown in white on each agent. The time-unaware \texttt{Basic COMA} red agent tries to move the box up at $t = 2$, which causes it to drop the box at $t = 3$. The time-aware \texttt{Phasic COMA} red agent learns to take a null action at $t = 2$. The hard setting requires more sophisticated coordination between agents. Best viewed in color.}
    \label{fig:moveboxes_demo}
\end{figure}
\begin{figure}[t]
    \centering
    \includegraphics[width=.82\columnwidth]{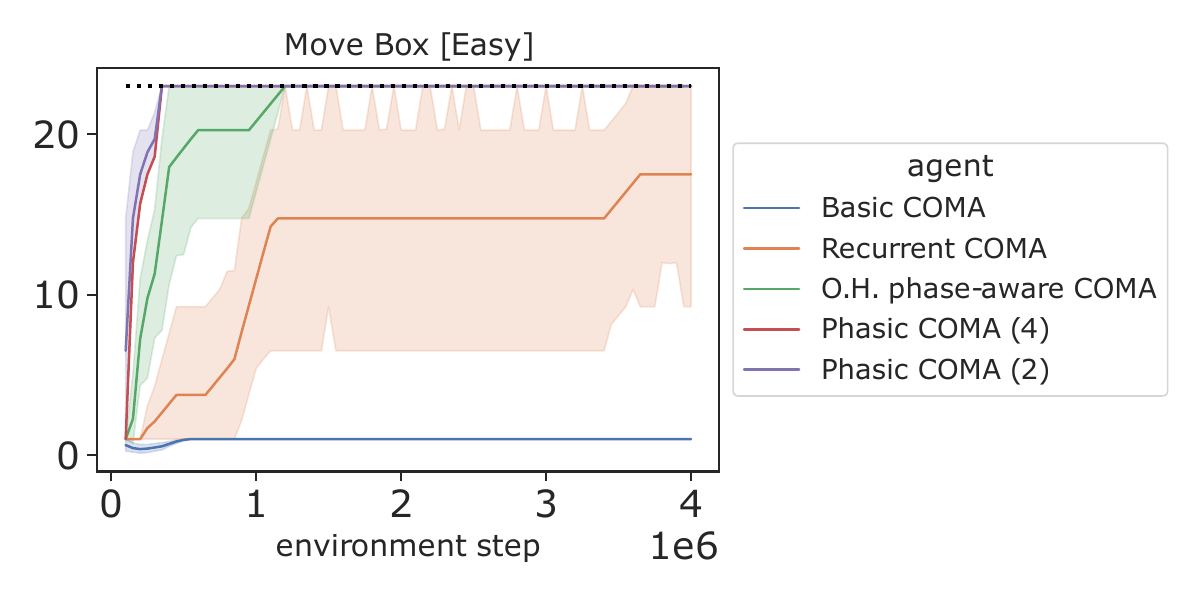}
    \includegraphics[width=.82\columnwidth]{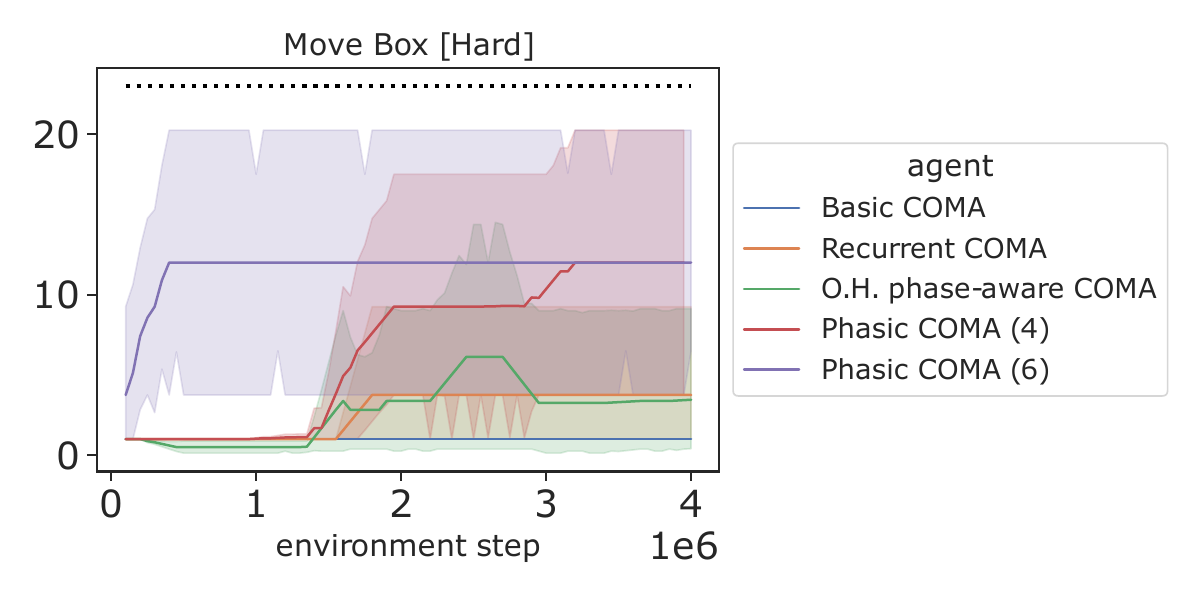}
    \caption{\revision{\textbf{Move Box return vs. steps}. Mean return over 8 random seeds of the learned greedy policy at various steps during training (shaded region is the 95\% confidence interval (CI)). Best possible return shown by black dotted line. The PFNN-based agent \texttt{Phasic COMA} with correct periods $K = 2,6$ achieves the highest reward in the fewest environment steps.\label{fig:moveboxes_plot}}}
\end{figure}
\begin{table}[t]
\footnotesize
\caption{\textbf{Move Box results.} Success (\%) is the \revision{fraction out} of 8 random seeds that the agents learned to move the box to the goal. The optimal return is 23.0.}
\label{tab:movebox}
\begin{adjustbox}{max width=\columnwidth}
\begin{tabular}{@{}lcccc@{}}
\toprule
                      & \multicolumn{2}{c}{Move Box [Easy]} & \multicolumn{2}{c}{Move Box [Hard]} \\ \midrule
                     COMA & Success (\%)    & Avg. return    & Success (\%)    & Avg. return    \\ \midrule
Basic            & 0               & 1.0\tiny{$\pm 0.0$}               & 0               & 1.0\tiny{$\pm 0.0$}               \\
Recurrent        & 75              & 17.5\tiny{$\pm 10.2$}       & 0               & 3.75\tiny{$\pm 7.8$}        \\
O.H. phase-aware & \textbf{100}             & \textbf{23.0}\tiny{$\pm 0.0$}              & 0               & 3.45\tiny{$\pm 7.9$}        \\
\revision{Phasic (4)  }        & \textbf{100}             & \textbf{23.0}\tiny{$\pm 0.0$}              & \textbf{50}              & 12.0\tiny{$\pm 11.8$}       \\
Phasic (15)          & \textbf{100}             & \textbf{23.0}\tiny{$\pm 0.0$}              & \textbf{50}              & 12.0\tiny{$\pm 11.8$}       \\ \bottomrule
\end{tabular}
\end{adjustbox}
\end{table}

\textbf{Results: }Table~\ref{tab:movebox} shows quantitative results and Fig.~\ref{fig:moveboxes_plot} compares \revision{test return as a function of environment steps}. 
Both \texttt{O.H. phase-aware COMA} and \texttt{Phasic COMA} learn to reliably solve the easy Move Box environment across all random seeds, with \texttt{Phasic COMA} demonstrating a small advantage in terms of efficiency.
\texttt{Recurrent COMA} \revision{needs more steps to achieve a good test return yet ultimately} performs less reliably.
\texttt{Basic COMA} fails on this environment as expected due to observation aliasing (Fig.~\ref{fig:moveboxes_demo}).
In the hard version (Fig.~\ref{fig:moveboxes_plot}), only \texttt{Phasic COMA} learns to solve the environment on \revision{just} 50\% of the training runs.
\revision{The PFNN-based actor and critic is robust to a slightly smaller or larger period than $K$, although it appears to require more environment steps.}

\subsection{Building Energy Management}
\begin{figure}[t]
    \centering
    \includegraphics[width=.82\columnwidth]{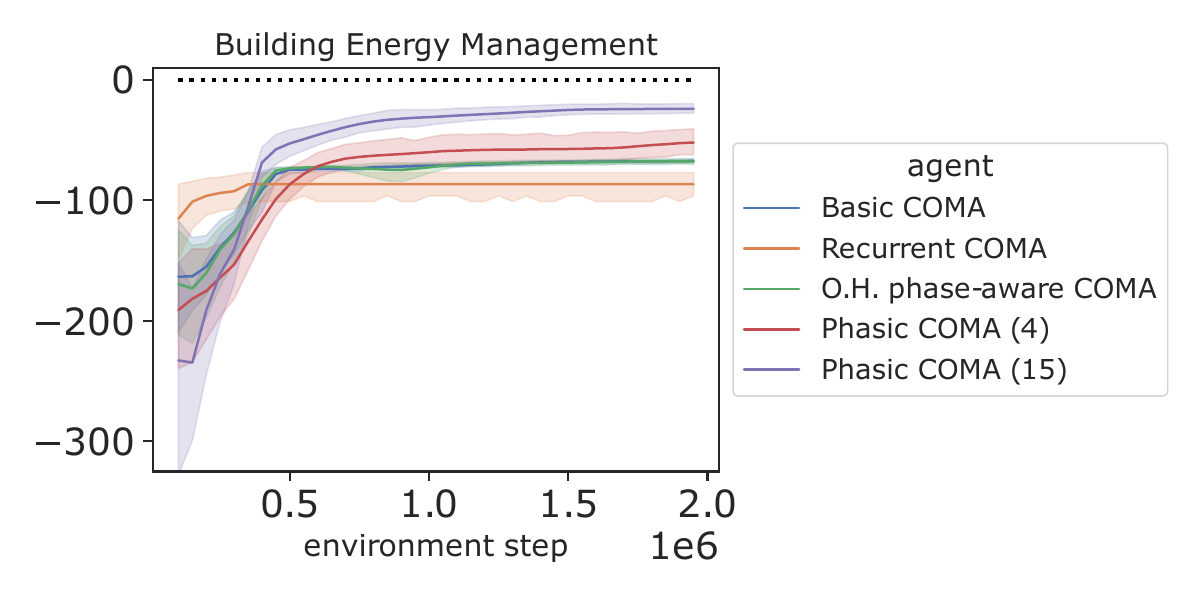}
    \caption{\textbf{BEM \revision{results}.} Mean return over 8 random seeds of the learned greedy policy at various steps during training (shaded region is the 95\% CI). Best possible is the black dotted line.  The \texttt{Phasic COMA} agent with correct period $K = 15$ outperforms \revision{non-phasic} variants by a wide margin.}
    \label{fig:bem_plot}
    \vspace{-1.em}
\end{figure}
\begin{figure}[t]
    \centering
        \centering
        \includegraphics[scale=.4]{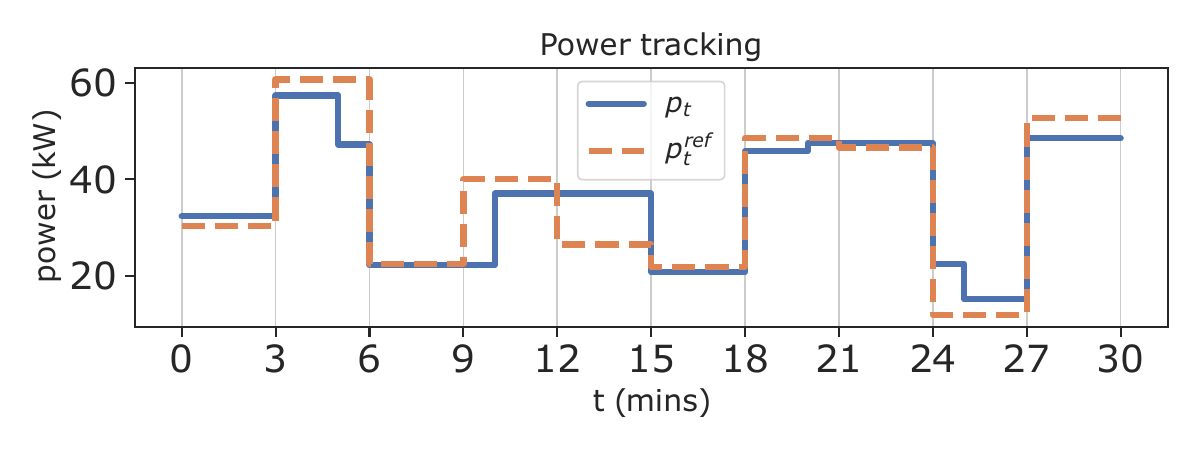}
        \caption{\textbf{BEM qualitative results}. Total power vs. power reference signal from one of the \texttt{Phasic COMA} training runs.\label{fig:qual_pt}}
\end{figure}
\begin{figure}[t]
        \centering
        \includegraphics[scale=.4]{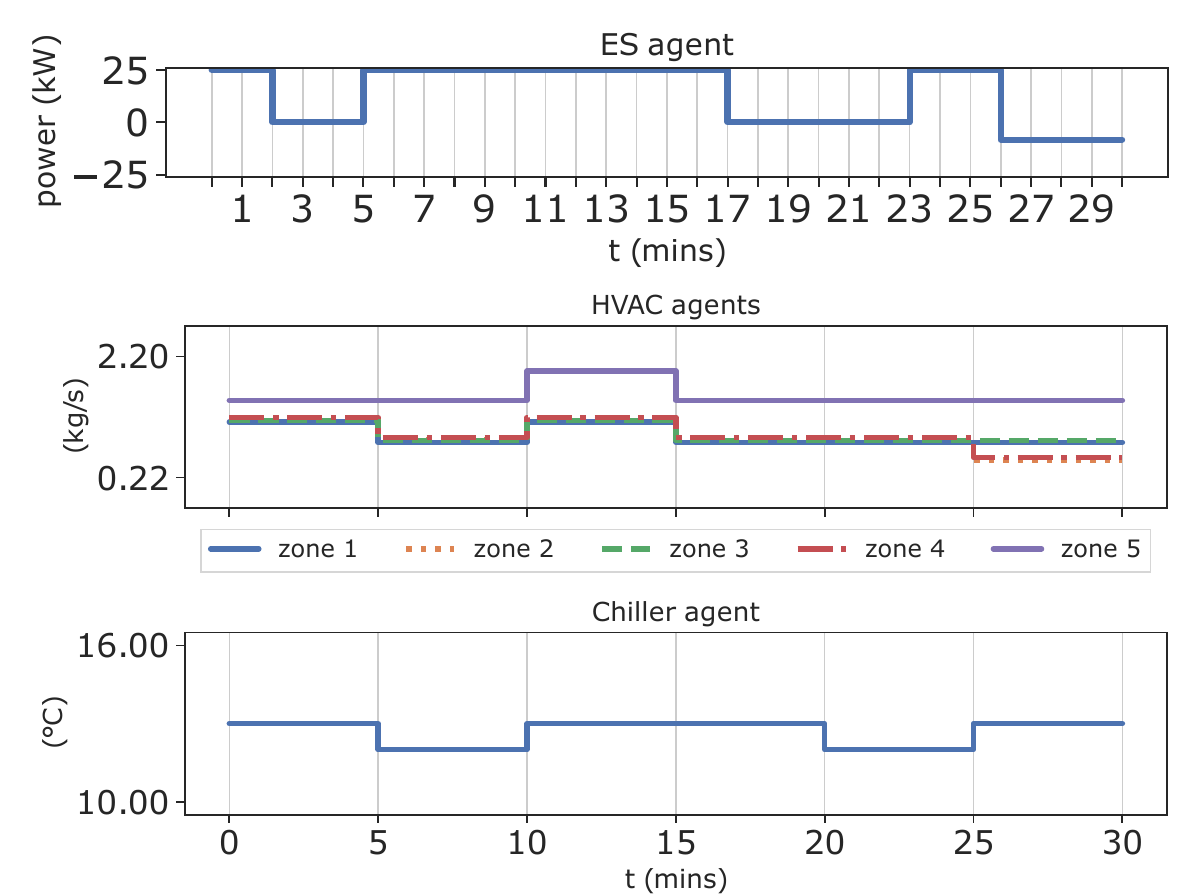}
        \caption{\textbf{BEM qualitative results}. Agent actions from one of the \texttt{Phasic COMA} training runs.\label{fig:building-actions}}
    \vspace{-2em}
\end{figure}
\textbf{Setup: }In this environment, agents attempt to coordinate the control of HVAC and energy storage (ES) for a five-zone small office building. 
See Biagioni et al.~\cite{biagioni2022model} for details about the reduced order model used to simulate the building.
There are 7 agents: an HVAC agent per zone able to change the mass flow rate (kg/s) every 5 mins, an HVAC chiller agent that can change the discharge air temperature (\degree{C}) every 5 mins, and an ES agent that can change its charging or discharging power (kW) every 1 minute.
The goal is for the agents to coordinate their total power consumption to track a reference power signal that changes every 3 mins while minimizing discomfort to building occupants.
The control horizon is set to 30 mins ($dt = 1$min).
The global reward function at time step $t$ is defined as
\begin{equation*}
    r_t = - \sum_{\text{zone}_i} \big( (T_t^i - \overline{T})^{+} + (\underline{T} - T_t^i)^{+} \big)^2 - \alpha (p_t - p_t^{\text{ref}})^2,
\end{equation*}
where $\alpha = 0.01$, $\overline{T} := 26$ \degree{C} and $\underline{T} := 24$ \degree{C} is the thermal comfort band, $T_t^i$ is zone $i$'s temperature, and $p_t$ is the total power.
To implement multiple timescales, agents repeat their previous action between steps. 
The period used for learning periodic non-stationary policies is $K := \texttt{LCM}(1,5, C = 3) = 15$, where $C$ encodes the cyclic power reference signal.

\textbf{Results: }Out of all agents, only $\texttt{Phasic COMA}$ is able to reliably learn a near-optimal joint policy (Fig.~\ref{fig:bem_plot}).
\revision{The variant with arbitrary PFNN period $4 \ll K$ is the second best agent}. 
We visualize the control actions selected by the joint policy from one of the \texttt{Phasic COMA} runs in Fig.~\ref{fig:building-actions}.
The slow HVAC agents have successfully learned to coordinate with the faster ES agent to track the reference signal (Fig.~\ref{fig:qual_pt}) without violating the thermal comfort band; for example, by increasing their power consumption between 10-15 mins while the ES agent is already maximally discharging.

\section{CONCLUSIONS}
In this work, we proposed a multi-timescale MARL framework for learning policies that can represent complex time-dependent behaviors.
We introduced the multi-timescale non-stationary joint policy as the policy induced by a $K$-periodic non-stationary joint policy, where the period $K$ is given by knowledge about agent timescales and cyclic environment components, both of which are typically known \emph{a priori}.
We use phase-functioned neural networks to introduce an inductive bias for learning a periodic non-stationary joint policy.
Our results on grid world and building energy management environments establish the effectiveness of our framework, suggesting that follow-up work could explore using it to solve more advanced power systems problems.

\bibliographystyle{IEEEtran}
\bibliography{main}

\begin{thebibliography}{10}
\providecommand{\url}[1]{#1}
\csname url@samestyle\endcsname
\providecommand{\newblock}{\relax}
\providecommand{\bibinfo}[2]{#2}
\providecommand{\BIBentrySTDinterwordspacing}{\spaceskip=0pt\relax}
\providecommand{\BIBentryALTinterwordstretchfactor}{4}
\providecommand{\BIBentryALTinterwordspacing}{\spaceskip=\fontdimen2\font plus
\BIBentryALTinterwordstretchfactor\fontdimen3\font minus
  \fontdimen4\font\relax}
\providecommand{\BIBforeignlanguage}[2]{{%
\expandafter\ifx\csname l@#1\endcsname\relax
\typeout{** WARNING: IEEEtran.bst: No hyphenation pattern has been}%
\typeout{** loaded for the language `#1'. Using the pattern for}%
\typeout{** the default language instead.}%
\else
\language=\csname l@#1\endcsname
\fi
#2}}
\providecommand{\BIBdecl}{\relax}
\BIBdecl

\bibitem{papoudakis2019dealing}
G.~Papoudakis, F.~Christianos, A.~Rahman, and S.~V. Albrecht, ``Dealing with
  non-stationarity in multi-agent deep reinforcement learning,'' \emph{arXiv
  preprint arXiv:1906.04737}, 2019.

\bibitem{HernandezLeal2019}
P.~Hernandez-Leal, M.~Kaisers, T.~Baarslag, and E.~M. de~Cote, ``A survey of
  learning in multiagent environments: Dealing with non-stationarity,''
  \emph{ArXiv preprint}, vol. abs/1707.09183, 2017.

\bibitem{Lowe2020}
R.~Lowe, Y.~Wu, A.~Tamar, J.~Harb, P.~Abbeel, and I.~Mordatch, ``Multi-agent
  actor-critic for mixed cooperative-competitive environments,'' in
  \emph{Advances in Neural Information Processing Systems 30: Annual Conference
  on Neural Information Processing Systems 2017, December 4-9, 2017, Long
  Beach, CA, {USA}}, 2017, pp. 6379--6390.

\bibitem{Foerster2017}
J.~N. Foerster, G.~Farquhar, T.~Afouras, N.~Nardelli, and S.~Whiteson,
  ``Counterfactual multi-agent policy gradients,'' in \emph{Proceedings of the
  Thirty-Second {AAAI} Conference on Artificial Intelligence, (AAAI-18), the
  30th innovative Applications of Artificial Intelligence (IAAI-18), and the
  8th {AAAI} Symposium on Educational Advances in Artificial Intelligence
  (EAAI-18), New Orleans, Louisiana, USA, February 2-7, 2018}, 2018, pp.
  2974--2982.

\bibitem{Holden2017}
D.~Holden, T.~Komura, and J.~Saito, ``Phase-functioned neural networks for
  character control,'' \emph{{ACM} Trans. Graph.}, vol.~36, no.~4, pp. 1--13,
  2017.

\bibitem{bernstein2002complexity}
D.~S. Bernstein, R.~Givan, N.~Immerman, and S.~Zilberstein, ``The complexity of
  decentralized control of markov decision processes,'' \emph{Mathematics of
  operations research}, vol.~27, no.~4, pp. 819--840, 2002.

\bibitem{Wu2020}
J.~Wu, K.~Li, and Q.-S. Jia, ``Decentralized multi-agent reinforcement learning
  with multi-time scale of decision epochs,'' in \emph{2020 59th {IEEE}
  Conference on Decision and Control ({CDC})}, 2020, pp. 578--584, {ISSN}:
  2576-2370.

\bibitem{Oliehoek2008}
F.~A. Oliehoek, M.~T.~J. Spaan, and N.~Vlassis, ``Optimal and approximate
  q-value functions for decentralized {POMDPs},'' \emph{{JAIR}}, vol.~32, pp.
  289--353, 2008.

\bibitem{pardo2018time}
F.~Pardo, A.~Tavakoli, V.~Levdik, and P.~Kormushev, ``Time limits in
  reinforcement learning,'' in \emph{International Conference on Machine
  Learning}.\hskip 1em plus 0.5em minus 0.4em\relax PMLR, 2018, pp. 4045--4054.

\bibitem{wang2020r}
R.~E. Wang, M.~Everett, and J.~P. How, ``R-maddpg for partially observable
  environments and limited communication,'' \emph{arXiv preprint
  arXiv:2002.06684}, 2020.

\bibitem{wu2023multi}
Z.~Wu, Y.~Li, W.~Gu, Z.~Dong, J.~Zhao, W.~Liu, X.-P. Zhang, P.~Liu, and Q.~Sun,
  ``Multi-timescale voltage control for distribution system based on
  multi-agent deep reinforcement learning,'' \emph{International Journal of
  Electrical Power \& Energy Systems}, vol. 147, p. 108830, 2023.

\bibitem{Ochoa2022}
T.~Ochoa, E.~Gil, A.~Angulo, and C.~Valle, ``Multi-agent deep reinforcement
  learning for efficient multi-timescale bidding of a hybrid power plant in
  day-ahead and real-time markets,'' \emph{Applied Energy}, vol. 317, p.
  119067, 2022.

\bibitem{HSC2003}
{Hyeong Soo Chang}, P.~Fard, S.~Marcus, and M.~Shayman, ``Multitime scale
  markov decision processes,'' \emph{{IEEE} Trans. Automat. Contr.}, vol.~48,
  no.~6, pp. 976--987, 2003.

\bibitem{He2011}
M.~He, S.~Murugesan, and J.~Zhang, ``A markov decision process approach to
  multi-timescale scheduling and pricing in smart grids with integrated wind
  generation,'' in \emph{2011 4th {IEEE} International Workshop on
  Computational Advances in Multi-Sensor Adaptive Processing ({CAMSAP})}, 2011,
  pp. 125--128.

\bibitem{lee2020reinforcement}
J.~Lee, B.-J. Lee, and K.-E. Kim, ``Reinforcement learning for control with
  multiple frequencies,'' \emph{Advances in Neural Information Processing
  Systems}, vol.~33, pp. 3254--3264, 2020.

\bibitem{boutilier1996planning}
C.~Boutilier, ``Planning, learning and coordination in multiagent decision
  processes,'' in \emph{TARK}, vol.~96.\hskip 1em plus 0.5em minus 0.4em\relax
  Citeseer, 1996, pp. 195--210.

\bibitem{Sharma}
A.~Sharma and K.~M. Kitani, ``Phase-parametric policies for reinforcement
  learning in cyclic environments,'' in \emph{Proceedings of the Thirty-Second
  {AAAI} Conference on Artificial Intelligence, (AAAI-18), the 30th innovative
  Applications of Artificial Intelligence (IAAI-18), and the 8th {AAAI}
  Symposium on Educational Advances in Artificial Intelligence (EAAI-18), New
  Orleans, Louisiana, USA, February 2-7, 2018}, 2018, pp. 6540--6547.

\bibitem{jiang2021multi}
S.~Jiang and C.~Amato, ``Multi-agent reinforcement learning with directed
  exploration and selective memory reuse,'' in \emph{Proceedings of the 36th
  Annual ACM Symposium on Applied Computing}, 2021, pp. 777--784.

\bibitem{biagioni2022model}
D.~Biagioni, X.~Zhang, C.~Adcock, M.~Sinner, P.~Graf, and J.~King, ``From
  model-based to model-free: Learning building control for demand response,''
  \emph{arXiv preprint arXiv:2210.10203}, 2022.

\end{thebibliography}

\end{document}